\documentclass{article}

\usepackage{microtype}
\usepackage{graphicx}
\usepackage{subfigure}
\usepackage{booktabs} 
\usepackage{array} 
\usepackage{hyperref}

\usepackage[preprint]{icml2026}
\usepackage{booktabs}
\usepackage{amsmath}
\usepackage{tabularx} 
\usepackage{ragged2e}
\usepackage{amsmath}
\usepackage{amssymb}
\usepackage{mathtools}
\usepackage{amsthm}
\usepackage{multirow}
\usepackage[capitalize,noabbrev]{cleveref}

\theoremstyle{plain}
\newtheorem{theorem}{Theorem}[section]

\newtheorem{lemma}[theorem]{Lemma}

\theoremstyle{definition}
\newtheorem{definition}[theorem]{Definition}

\theoremstyle{remark}

\usepackage[textsize=tiny]{todonotes}

\begin{document}

\twocolumn[
\icmltitle{KANFIS: A Neuro-Symbolic Framework for Interpretable and Uncertainty-Aware Learning}



\icmlsetsymbol{equal}{*}

\begin{icmlauthorlist}
\icmlauthor{Binbin Yong}{1}
\icmlauthor{Haoran Pei}{1}
\icmlauthor{Jun Shen}{2}
\icmlauthor{Haoran Li}{3}
\icmlauthor{Qingguo Zhou}{1}
\icmlauthor{Zhao Su}{1}

\end{icmlauthorlist}

\icmlaffiliation{1}{School of Information Science and Engineering, Lanzhou University, China}
\icmlaffiliation{2}{School of Computing and Information Technology, University of Wollongong, Australia}
\icmlaffiliation{3}{Department of Data Science and Artificial Intelligence, Monash University, Australia}

\icmlcorrespondingauthor{Zhao Su}{szhao2024@lzu.edu.cn}

\icmlkeywords{Machine Learning, ICML}

\vskip 0.3in
]



\printAffiliationsAndNotice{} 

\begin{abstract}
Adaptive Neuro-Fuzzy Inference System (ANFIS) was designed to combine the learning capabilities of neural network with the reasoning transparency of fuzzy logic. However, conventional ANFIS architectures suffer from structural complexity, where the product-based inference mechanism causes an exponential explosion of rules in high-dimensional spaces. We herein propose the \textbf{K}olmogorov-\textbf{A}rnold \textbf{N}euro-\textbf{F}uzzy \textbf{I}nference \textbf{S}ystem (KANFIS), a compact neuro-symbolic architecture that unifies fuzzy reasoning with additive function decomposition. KANFIS employs an additive aggregation mechanism, under which both model parameters and rule complexity scale linearly with input dimensionality rather than exponentially. Furthermore, KANFIS is compatible with both Type-1 (T1) and Interval Type-2 (IT2) fuzzy logic systems, enabling explicit modeling of uncertainty and ambiguity in fuzzy representations. By using sparse masking mechanisms, KANFIS generates compact and structured rule sets, resulting in an intrinsically interpretable model with clear rule semantics and transparent inference processes. Empirical results demonstrate that KANFIS achieves competitive performance against representative neural and neuro-fuzzy baselines. 
\end{abstract}

\section{Introduction}

Interpretable machine learning has re-emerged as a central research focus as modern neural networks continue to grow in scale, complexity, and opacity, raising increasing concerns about transparency, reliability, and trustworthiness \cite{somvanshi2025survey,minh2022explainable}. Despite their impressive empirical performance across domains such as vision, language, and time-series forecasting, conventional architectures such as Multilayer Perceptron (MLP) and Transformer often operate as black boxes \cite{ibrahim2023explainable,hassija2024interpreting}, offering limited insights into their internal reasoning processes or decision structures. This lack of transparency becomes particularly critical in high-stake applications \cite{liao2024can,khan2024explainable}, including energy forecasting, finance, and autonomous systems, where model reliability, uncertainty awareness, and interpretability are increasingly regarded as top priority requirements \cite{arrieta2020explainable,rudin2022interpretable}.
\begin{figure}
    \centering
    \includegraphics[width=1\linewidth]{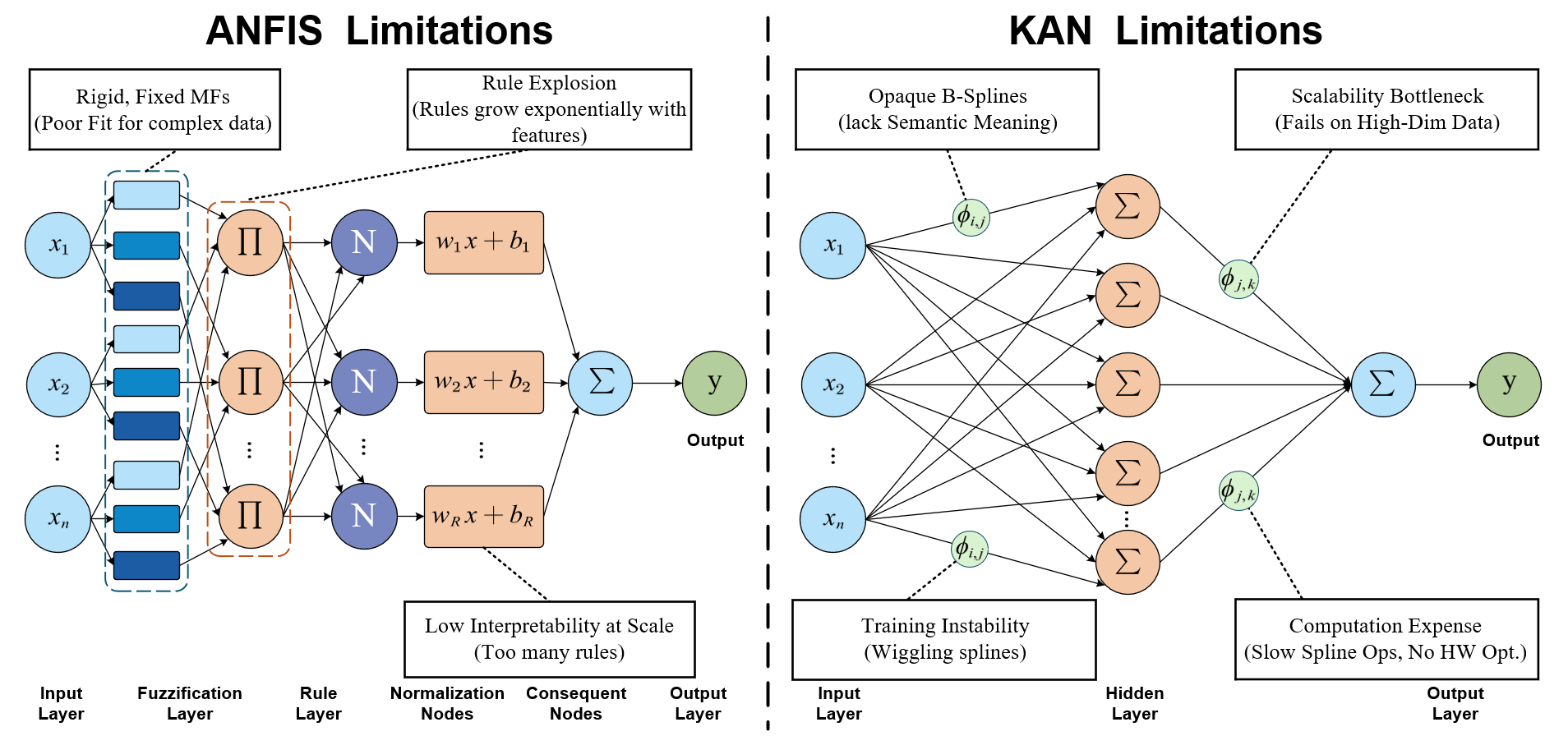}
    \caption{\textbf{Limitations of Existing Interpretable Models.} Conventional ANFIS and KAN models suffer from several limitations in terms of interpretability and model complexity.}
    \label{fig1}
    \vspace{-6mm}
\end{figure}
%

Consequently, there has been growing interest in developing models that combine strong predictive performance with intrinsic interpretability. One representative direction is the Neuro-Fuzzy System (NFS), which bridges statistical learning with symbolic reasoning \cite{minh2022explainable,hassija2024interpreting}. The Adaptive Neuro-Fuzzy Inference System (ANFIS) represents a representative NFS architecture that combines fuzzy rule-based reasoning with gradient-based parameter learning, enabling models to express structured linguistic rules while adapting to data.
However, these traditional frameworks typically rely on fixed linear consequent mappings and static rule structures, which severely limit their expressivity and scalability, particularly in high-dimensional or highly nonlinear settings. Subsequent developments have sought to mitigate these limitations by introducing self-organizing rule structures that evolve from data streams \cite{meng2025self}, Interval Type-2 (IT2) fuzzy sets for enhanced uncertainty modeling \cite{yao2025self}, and hybrid neuro-fuzzy architectures designed to capture hierarchical or complex features \cite{shihabudheen2018recent}. Despite these advancements, a fundamental trade-off persists: most neuro-fuzzy systems struggle to simultaneously achieve strong nonlinear approximation, compact and interpretable rule structures, and robustness under uncertainty, particularly as model complexity increases.


In parallel, recent work on the Kolmogorov-Arnold Network (KAN) \cite{somvanshi2025survey} demonstrates the expressive power of functional composition as a replacement for conventional linear weight matrices. By learning adaptive univariate basis functions along each path of a computational graph, KAN significantly enhances representational capacity while maintaining a structured decomposition aligned with the classical Kolmogorov-Arnold superposition theorem. Although KAN does not provide explicitly interpretable rules, these developments still illustrate the potential advantages of replacing fixed linear transformations with learnable nonlinear operators, particularly in systems that inherently rely on compositional mappings, such as fuzzy inference models. Recent systematic reviews on fuzzy systems and interpretable machine learning highlight that evolving and hybrid fuzzy architectures have been increasingly studied for their ability to balance approximation power and rule interpretability, yet they still face challenges in scalability and structural transparency \cite{gu2023autonomous, ouifak2025comprehensive}. As illustrated in Figure~\ref{fig1}, existing interpretable models exhibit complementary limitations: ANFIS suffers from rigid structures and rule explosion, whereas KAN encounters semantic opacity and scalability bottlenecks in high-dimensional regimes.


Motivated by the aforementioned problems, we propose the \textbf{K}olmogorov-\textbf{A}rnold \textbf{N}euro-\textbf{F}uzzy \textbf{I}nference \textbf{S}ystem (KANFIS). This architecture integrates the symbolic reasoning of ANFIS with the additivity theorem of Kolmogorov-Arnold networks. Specifically, we design the network edges to function as learnable membership functions, which are then aggregated using the additivity principle. This configuration enhances expressive power without sacrificing interpretability, as it naturally accommodates Type-1 (T1) and IT2 reasoning to explicitly encode uncertainty, while the functional operators provide transparent pathways describing how each input feature contributes to the prediction. Furthermore, to guarantee model compactness and interpretability, we further introduce specific regularization terms. One term induces sparsity to control the number of active features in each rule, while the other enforces diversity to maintain distinctiveness between rules. Consequently, KANFIS achieves competitive prediction accuracy while generating concise, human-readable rule-based explanations.


Our contributions are summarized as follows: $\bullet$ We propose KANFIS, the first framework to realize the symbolic reasoning of ANFIS within a Kolmogorov-Arnold-style additive architecture. By leveraging the superposition of learnable functions, KANFIS achieves exceptional nonlinear approximation capabilities. 
$\bullet$ We achieve a compact architectural design by leveraging the Kolmogorov-Arnold additivity principle together with a dedicated regularization strategy, enabling efficient function approximation with fewer rules while suppressing redundant features and rule proliferation.
$\bullet$ Benefiting from its unique design, KANFIS maintains compact and well-structured rule representations in high-dimensional settings, mitigating rule explosion and interpretability degradation observed in conventional ANFIS.

\section{Related Work}


Our work focuses on intrinsically interpretable learning models with transparent reasoning processes. A broad range of related studies has explored interpretable architectures \cite{tursunalieva2024making}, neuro-fuzzy inference systems, and function-based model representations \cite{somvanshi2025survey}, forming the foundation of this line of research. 


\subsection{Intrinsic Interpretability and Additive Models}


The notion of interpretability is inherently ambiguous and has been shown to encompass multiple, often conflated, concepts \cite{lipton2018mythos}. In the context of Explainable AI (XAI) \cite{gao2024going,mehdiyev2025interpretable}, a key distinction lies between post-hoc explanations and intrinsically interpretable models. Post-hoc methods such as SHAP \cite{lundberg2017unified,enouen2025instashap} and LIME \cite{ribeiro2016why} provide local approximations of black-box predictors, yet their faithfulness to the underlying decision process is frequently questioned. Rudin \cite{rudin2019stop,rudin2022interpretable} argues that reliable transparency should instead be achieved through models that are interpretable by design.
Generalized Additive Models (GAMs) constitute a foundational class of intrinsically interpretable models, expressing predictions as sums of feature-wise effects. Recent variants, such as M-GAM, extend GAMs to practical settings by explicitly handling missing data while preserving interpretability \cite{mctavish2024interpretable}.
Neural Additive Models (NAMs) \cite{agarwal2021neural} enhance GAMs by parameterizing univariate functions with neural networks, substantially improving expressive power while retaining additive transparency. By enabling direct inspection of learned feature functions, NAMs offer an effective balance between accuracy and interpretability, a property further examined by Mariotti et al. \cite{mariotti2023exploring}.
To capture feature interactions, several extensions have been proposed, including GA$^2$M with pairwise terms \cite{lou2013accurate}, NODE-GAM with tree-based components \cite{chang2021node}, and GAMI-Net with hierarchical regularization \cite{yang2021gami}. Nevertheless, modeling complex or high-order interactions in a globally interpretable manner remains challenging. In contrast, our approach exploits the compositional, rule-based structure of fuzzy systems to encode rich interactions while maintaining semantic transparency.
\begin{figure*}[!t]
    \centering
    \includegraphics[width=1\textwidth]{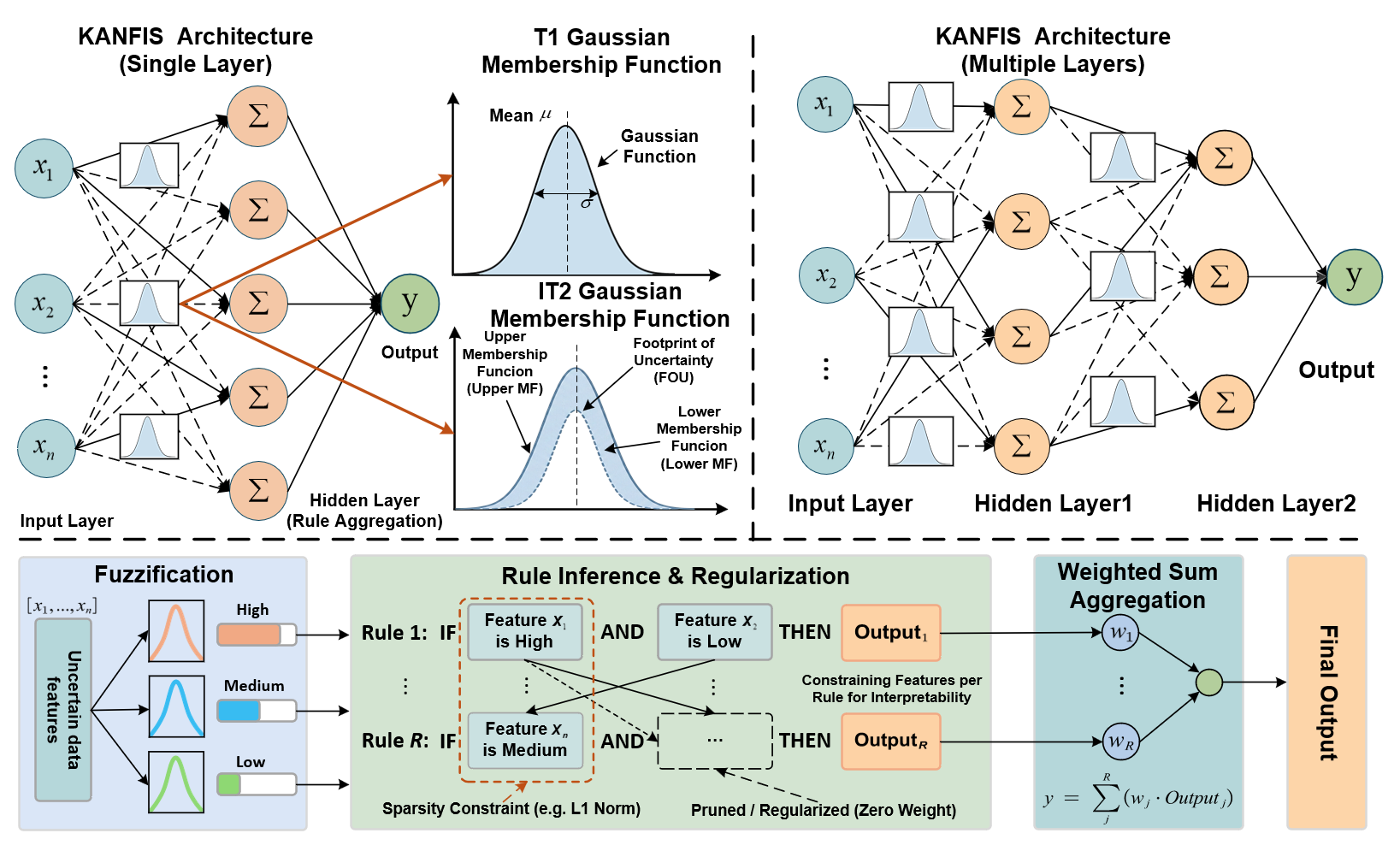}
    \vspace{-4mm}
    \caption{ \textbf{Top: }KANFIS structures with one and two layers, respectively. Dashed lines indicate paths that may be selected during learning. \textbf{Bottom:} Computational flow of the model. $x$ denotes the input features, $y$ the predicted output, and $\omega$ the weight of each rule, reflecting the influence of each rule pattern on the final prediction. }
    \label{model}
    \vspace{-6mm}
\end{figure*}

\subsection{Adaptive Neuro-Fuzzy Inference System (ANFIS)}
ANFIS combine the learning capabilities of neural networks with the transparent, rule-based reasoning of fuzzy logic \cite{rawal2025review}. The seminal ANFIS architecture maps a Takagi-Sugeno fuzzy inference system into a five-layer adaptive network, enabling interpretable “IF-THEN” rules alongside gradient-based optimization. ANFIS has been widely applied in control, regression, and pattern recognition tasks due to this balance of adaptability and transparency. However, as input dimensionality grows, the number of fuzzy rules increases exponentially, leading to the curse of dimensionality and limiting scalability in high-dimensional or industrial data scenarios \cite{xue2023high, cui2022layer}.
To address these limitations, a wide range of extensions have been proposed. Clustering-based rule generation and optimization strategies reduce rule complexity, while recurrent and deep fuzzy architectures enhance modeling capacity for temporal and high-dimensional data. For example, fuzzy recurrent stochastic configuration networks integrate TSK fuzzy inference with recurrent stochastic mechanisms, achieving robust performance on industrial datasets \cite{wang2024fuzzy}. Deep spatio-temporal fuzzy models have been applied to geospatial forecasting tasks such as NDVI prediction, effectively capturing spatio-temporal correlations while maintaining interpretability \cite{su2024deep}. Hybrid structure-learnable models like SL-ANFIS-LSTM combine fuzzy reasoning with LSTM dynamics for ultra-short-term photovoltaic power forecasting, balancing nonlinear temporal modeling and transparent rule extraction \cite{su2025s}. Recent surveys further highlight the continuous evolution of neuro-fuzzy architectures toward scalable, high-dimensional, and interpretable designs \cite{gu2023autonomous, apiecionek2025fuzzy}.
Despite these advances, most contemporary neuro-fuzzy systems still rely on conventional neural parameterizations in antecedent or consequent parts, limiting parameter efficiency and expressive power in very high-dimensional tasks. In contrast, our approach introduces KAN as a parameter-efficient, compositional alternative within the fuzzy inference framework, preserving semantic transparency while encoding complex interactions naturally. This design addresses the longstanding trade-off between interpretability and expressive capacity, situating our work at the intersection of structured fuzzy reasoning and function-based neural architectures.
\subsection{Kolmogorov-Arnold Network (KAN)}
The Kolmogorov-Arnold representation theorem states that any multivariate continuous function can be represented as a finite composition of univariate functions and addition, offering a theoretical foundation for structured function approximation. Historically, this theorem was largely dismissed in neural network design due to smoothness and constructive concerns. Recently, Liu et al. \cite{liu2024kan} revitalized this theory through KAN, which replaces conventional linear weight matrices with learnable univariate functions along edges, effectively embedding functional decomposition into the network architecture.
Unlike MLP with fixed activation functions, KAN allow each edge to have a learnable activation function, often parameterized as B-splines. This design improves both expressive efficiency and interpretability, as the learned univariate functions can be visualized and semantically analyzed \cite{somvanshi2025survey, ta2024bsrbf}. Extensions of KAN explore alternative bases, such as sinusoidal activations (SineKAN) \cite{reinhardt2025sinekan} or hybrid B-spline + radial basis functions (BSRBF-KAN) \cite{ta2024bsrbf}, which improve approximation fidelity and training stability across various tasks.
Empirical studies demonstrate that KAN can match or outperform larger MLP in both data fitting and symbolic regression tasks while maintaining significantly fewer parameters \cite{liu2024kan, wang2025research}. Complementary theoretical work establishes generalization bounds for KAN, showing that the structured composition of univariate functions allows for a rigorous analysis of the complexity of the model \cite{zhang2024generalization}.
In this work, we reinterpret these learnable univariate functions as adaptive fuzzy membership functions, grounding KAN within a semantic logical framework. This enables a natural integration with fuzzy inference systems, achieving compositional transparency and interpretable reasoning in high-dimensional predictive modeling.
\subsection{Theoretical Connection: Splines and Fuzzy Sets}
The integration of KAN into fuzzy inference systems is theoretically supported by the well-established connection between B-splines and fuzzy logic. Classical results show that the B-spline neural network can be interpreted as a class of fuzzy systems under specific overlap and normalization conditions . This equivalence provides a rigorous mathematical foundation for viewing learned univariate functions as fuzzy membership functions, enabling a principled fusion of functional approximation and symbolic reasoning.
Traditional B-spline networks, however, typically relied on fixed grid structures, constraining their adaptability and limiting expressive capacity in high-dimensional tasks. KAN introduces adaptive, learnable grids along edges, allowing each univariate function to self-organize in response to data. By incorporating these adaptive B-spline functions into the antecedent part of fuzzy rules, our approach bridges modern spline-based networks with interpretable logical systems, yielding models that are both mathematically sound and semantically meaningful.
Recent work has further extended this connection, demonstrating that structured spline expansions within KAN can rigorously approximate continuous multivariate functions while preserving interpretability \cite{ta2024bsrbf, reinhardt2025sinekan, zhang2024generalization}. These advances provide strong theoretical justification for our KANFIS framework, where fuzzy membership semantics are naturally encoded in learnable univariate components of KAN.
\section{Methodology}
The KANFIS model reorganizes the classical ANFIS architecture based on the Kolmogorov-Arnold representation theorem, resulting in a simple yet expressive interpretable neural network. The overall architecture and computational flow of KANFIS are illustrated in Figure~\ref{model}. This section presents a detailed description of the model architecture and the proposed methodology.

\begin{table*}[!t] 
    \centering
    \caption{\textbf{Comparative experimental results. }We adopt a clustering-based IT2-ANFIS to ensure its suitability as a baseline under high-dimensional feature settings. The symbol `-' indicates that the model cannot be trained due to the curse of dimensionality.}
    \label{tab:main_results}
    \begin{tabular}{llcccccc}
        \toprule
        Dataset & Metrics & IT2-KANFIS & T1-KANFIS & MLP & T1-ANFIS & IT2-ANFIS & KAN \\ 
        \midrule
        \multirow{3}{*}{CCPP}       
            & MAPE  & 0.7047          & 0.6777           & 0.7164  & 0.6759 & 0.6696 & \textbf{0.6389} \\
            & RMSE  & 4.1240          & 3.9542           & 4.1883  & 3.9980 & 4.0047 & \textbf{3.9313} \\
            & MAE   & 3.1975          & 3.0760           & 3.2553  & 3.0688 & 3.0439 & \textbf{2.8988} \\ 
        \midrule
        \multirow{3}{*}{Parkinsons} 
            & MAPE  & 14.891          & \textbf{14.477} & 19.604 & 16.167& 15.446& 14.610 \\
            & RMSE  & \textbf{0.0397}          & 0.0405  & 0.0527  & 0.0449 & 0.0433 & 0.0404  \\
            & MAE   & 0.0291          & \textbf{0.0289}  & 0.0389  & 0.0320 & 0.0310 & 0.0293  \\ 
        \midrule
        \multirow{3}{*}{BCW}        
            & ACC   & \textbf{0.9912} & 0.9912           & 0.9912  & --     & 0.9737 & 0.9737  \\
            & F1    & \textbf{0.9912} & 0.9912           & 0.9912  & --     & 0.9737 & 0.9736  \\
            & AUROC & \textbf{0.9990} & 0.9987           & 0.9845  & --     & 0.9954 & 0.9940  \\ 
        \midrule
        \multirow{3}{*}{Spambase}   
            & ACC   & \textbf{0.9392} & 0.9381           & 0.9175  & --     & 0.8827 & --      \\
            & F1    & \textbf{0.9392} & 0.9379           & 0.9172  & --     & 0.8821 & --      \\
            & AUROC & \textbf{0.9797} & 0.9769           & 0.9599  & --     & 0.9397 & --      \\ 
        \midrule
        \multirow{3}{*}{MHR}        
            & ACC   & \textbf{0.7931} & 0.7931           & 0.7094  & 0.7438 & 0.6847 & 0.7783  \\
            & F1    & \textbf{0.7942} & 0.7941           & 0.7028  & 0.7428 & 0.6714 & 0.7777  \\
            & AUROC & \textbf{0.9057} & 0.9157           & 0.8427  & 0.8881 & 0.8535 & 0.9071  \\ 
        \bottomrule
    \end{tabular}
    \vspace{-6mm}
\end{table*}

\subsection{Learnable Membership Functions}
Traditional neural networks rely on crisp scalar weights $w_{i,j}$ to modulate
the contribution of input $x_i$ to hidden neuron~$j$. In contrast, KANFIS
replaces each such weight with a set of $K$ learnable fuzzy functional mappings. To accommodate diverse data distributions and modeling requirements, the proposed architecture 
supports both T1 and IT2 configurations.
In the T1 setting, we incorporate three distinct membership functions to capture local nonlinearities: Gaussian, Generalized Bell, and Sigmoid. Detailed formulations for the Bell and Sigmoid functions are provided in \textbf{Appendix~\ref{Appendix:C}} for completeness.  
Taking the Gaussian function as the primary example, the membership grade for an input 
$x_i$ is parameterized by a learnable center $\mu_{i,j,k}$ and width $\sigma_{i,j,k}$:
\begin{equation}
\label{eq:type1_gaussian}
\mathcal{M}_{i,j,k}(x_i) = \exp\left(-\frac{(x_i - \mu_{i,j,k})^2}{2\sigma_{i,j,k}^2}\right).
\end{equation}

To enable the model to capture aleatoric uncertainty alongside nonlinearity, 
we extend the Gaussian basis to the IT2 setting. 
Each basis function is parameterized by a center $\mu_{i,j,k}$ and a pair of
standard deviations $(\sigma^{(l)}_{i,j,k}, \sigma^{(u)}_{i,j,k})$ that satisfy
\begin{equation}
\sigma^{(l)}_{i,j,k} > 0,
\qquad
\sigma^{(u)}_{i,j,k} > \sigma^{(l)}_{i,j,k}.
\label{eq:sigma_constraints_full}
\end{equation}

Given an input value $x_i$, its upper and lower membership degrees are
\begin{equation}
\mathrm{UMF}_{i,j,k}(x_i)
=
\exp\!\left(
-\frac{(x_i - \mu_{i,j,k})^2}{
    2\left(\sigma^{(u)}_{i,j,k}\right)^2}
\right),
\label{eq:umf_full}
\end{equation}
\begin{equation}
\mathrm{LMF}_{i,j,k}(x_i)
=
\exp\!\left(
-\frac{(x_i - \mu_{i,j,k})^2}{
    2\left(\sigma^{(l)}_{i,j,k}\right)^2}
\right).
\label{eq:lmf_full}
\end{equation}

These upper and lower surfaces define a Footprint of Uncertainty (FOU)
associated with the connection $i \rightarrow j$. To obtain a crisp quantity
during inference, we employ the center-of-sets type-reduction:
\begin{equation}
\phi_{i,j,k}(x_i) = 
\frac{ \mathrm{UMF}_{i,j,k}(x_i) + \mathrm{LMF}_{i,j,k}(x_i)}{2}.
\label{eq:center_reduction_full}
\end{equation}

The quantity $\phi_{i,j,k}(x_i)$ measures how strongly input~$x_i$ activates the
$k$-th fuzzy basis along the $(i,j)$ edge. We provide a theoretical proof justifying the validity of this additive 
aggregation scheme in \textbf{Appendix~\ref{Appendix:B}}.

\subsection{Functional Edges as Fuzzy Rule Antecedents}
Rather than using a single weight, each edge aggregates $K$ basis memberships:
\begin{equation}
e_{i,j}(x_i)
=
\sum_{k=1}^{K} 
\phi_{i,j,k}(x_i).
\label{eq:edge_activation_full}
\end{equation}
This quantity plays the role of a \emph{soft rule antecedent}. Specifically, the
interaction between input~$x_i$ and hidden unit~$j$ is interpreted as:
\begin{center}
\emph{``If feature $x_i$ belongs to one of the fuzzy sets associated
with unit $j$, then the rule fires with strength $e_{i,j}(x_i)$.''}
\end{center}

Because different features activate different fuzzy sets, the model naturally
learns localized nonlinear transformations without requiring explicit rule
enumeration.
\subsection{Kolmogorov-Arnold Aggregation Over Edges}
For hidden unit $j$, all incoming fuzzy edges are combined through the
Kolmogorov-style summation:
\begin{equation}
h_j
=
\sum_{i=1}^{d_{\mathrm{in}}} 
e_{i,j}(x_i).
\label{eq:hidden_full}
\end{equation}

\begin{table*}[!t]
    \renewcommand\tabularxcolumn[1]{m{#1}} 
    \centering
    \caption{\textbf{Interpretability analysis of fuzzy rules on the CCPP dataset.} The six most important rules learned by the model and their associated physical principles are presented in the table, with their validity supported by evidence from the cited relevant physical studies.
 }
    \label{tab:rule_interpretability1}
    
    \footnotesize 
    
    \setlength{\tabcolsep}{3pt} 
    
    \renewcommand{\arraystretch}{1.05} 

    \begin{tabularx}{\textwidth}{ 
        @{\hspace{1pt}} 
        >{\centering\arraybackslash}m{0.12\textwidth} 
        >{\centering\arraybackslash}m{0.37\textwidth} 
        >{\centering\arraybackslash}X 
        @{\hspace{1pt}} 
        }
        \toprule
        \textbf{IF} & \textbf{THEN} & \textbf{Interpretation} \\ 
        \midrule

          AT is HIGH 
        & $-0.3407 \cdot (7.26 \cdot  \mathcal{M}_{\text{AT}}(x_{\text{AT}}))$
        & High ambient temperature reduces air density and mass flow, lowering power output \cite{saravanamuttoo2001gas}.   \\
        \addlinespace[3pt] 
        
        V is LOW \newline \& AP is MED \newline \& RH is HIGH
        & $0.3070 \cdot ( 2.96 \cdot \mathcal{M}_{\text{V}}(x_{\text{V}}) + 0.14 \cdot \mathcal{M}_{\text{AP}}(x_{\text{AP}}) + 0.16 \cdot \mathcal{M}_{\text{RH}}(x_{\text{RH}}) )$
        & Low exhaust vacuum and high relative humidity reduce back pressure and increase specific heat, creating optimal operating conditions \cite{cengel2002thermodynamics}.   \\
        \addlinespace[3pt]
        
        AT is MED \newline \& AP is HIGH
        & $0.2541 \cdot (3.38 \cdot \mathcal{M}_{\text{AT}}(x_{\text{AT}}) + 0.48 \cdot \mathcal{M}_{\text{AP}}(x_{\text{AP}}) )$
        & Higher ambient pressure increases air density, enhancing turbine intake flow and efficiency even at moderate temperatures \cite{saravanamuttoo2001gas}. \\
        \addlinespace[3pt]
        
        V is LOW \newline \& RH is LOW
        & $0.2038 \cdot ( 2.31 \cdot \mathcal{M}_{\text{V}}(x_{\text{V}}) + 1.61 \cdot \mathcal{M}_{\text{RH}}(x_{\text{RH}}) )$ 
        & Low exhaust vacuum dominates efficiency; reduced condenser back pressure ensures high power output even at low humidity \cite{moran2010fundamentals}. \\
        \addlinespace[3pt]
        
        AT is MED \newline \& V is HIGH      
        & $-0.1831 \cdot ( 3 \cdot \mathcal{M}_{\text{AT}}(x_{\text{AT}}) + 0.53 \cdot \mathcal{M}_{\text{V}}(x_{\text{V}}) )$
        & High exhaust vacuum increases condenser back pressure, restricting steam expansion and offsetting medium-temperature benefits \cite{moran2010fundamentals}. \\
        \addlinespace[3pt]
        
        AT is HIGH \newline \& RH is LOW 
        & $-0.1351 \cdot(3.88 \cdot \mathcal{M}_{\text{AT}}(x_{\text{AT}}) + 0.93 \cdot \mathcal{M}_{\text{RH}}(x_{\text{RH}}) )$
        & Low relative humidity implies lower specific heat, slightly reducing gas turbine output at similar temperatures \cite{cengel2002thermodynamics}. \\
        
        \bottomrule
    \end{tabularx}
    \vspace{-6mm}
\end{table*}

Equation~\eqref{eq:hidden_full} is motivated directly by the
Kolmogorov-Arnold representation theorem, which states that any continuous
multivariate function can be represented as a finite superposition of univariate
functions and addition. In KANFIS, each $\phi_{i,j,k}$ is a univariate fuzzy transformation,  and $e_{i,j}$ aggregates these univariate transforms, and $h_j$ forms a superposition across input dimensions.
The resulting hidden representation thus acts as a learned nonlinear embedding
constructed entirely through fuzzy functional components.
\subsection{Deep Stacking of Fuzzy-Functional Layers}
Multiple KANFIS layers can be stacked to form a deep fuzzy inference hierarchy.
Let $\mathbf{h}^{(0)} = \mathbf{x}$ denote the input. For the $\ell$-th fuzzy
layer, the transformation is:
\begin{equation}
\mathbf{h}^{(\ell)} =
\mathcal{F}^{(\ell)}\!\left( \mathbf{h}^{(\ell-1)} \right),
\label{eq:layer_transform_full}
\end{equation}
where $\mathcal{F}^{(\ell)}$ encapsulates Equation~\eqref{eq:umf_full}-\eqref{eq:hidden_full} applied to all units in the layer.
To stabilize training and maintain numeric consistency across layers, a
normalization is applied after each layer except the final one:
\begin{equation}
\mathbf{h}^{(\ell)}
\leftarrow
\mathrm{Norm}(\mathbf{h}^{(\ell)}),
\qquad \ell < L.
\label{eq:layer_norm_full}
\end{equation}

This structure resembles a deep neural network, but with crisp weights replaced
entirely by IT2 fuzzy functional edges, resulting in richer
representational capacity and improved robustness to uncertainty.

\subsection{Defuzzification via Linear Projection}
After the final fuzzy layer, its output $\mathbf{h}^{(L)}$ is mapped to the
prediction space via a linear defuzzification layer:
\begin{equation}
\hat{\mathbf{y}}
=
W \mathbf{h}^{(L)} + b.
\label{eq:defuzz_full}
\end{equation}

This operation aggregates all activated fuzzy rules into a
single crisp prediction. It can be interpreted as a generalized consequent layer of a
Takagi-Sugeno fuzzy system, where the antecedent firing strengths are given by
the KANFIS pipeline and the consequent functions are linear mappings.
\subsection{Regularization}
While KANFIS possesses strong approximation capabilities, unconstrained training of fuzzy systems can lead to redundant rules with dense antecedents, which severely hinders interpretability. To ensure the learned rules are both concise (sparse) and diverse (distinct), we augment the primary task loss $\mathcal{L}_{\text{task}}$ with two specific regularization objectives. The total objective function is formulated as:
\begin{equation}
    \mathcal{L}_{\text{total}} = \mathcal{L}_{\text{task}} + \lambda_s \mathcal{R}_{\text{sparse}} + \lambda_d \mathcal{R}_{\text{distinct}},
\end{equation}
where $\lambda_s$ and $\lambda_d$ are hyperparameters that control the strength of the regularization.
\paragraph{Entropy-based Antecedent Sparsity.}
To improve human readability, each fuzzy rule should ideally condition on a minimal subset of relevant input features (typically 2-3 antecedents). We introduce a learnable soft-masking matrix $\mathbf{M} \in [0, 1]^{D \times K}$, where $M_{i,k}$ represents the importance of input feature $i$ to rule $k$. To avoid the optimization difficulties of discrete selection, we relax the problem by minimizing the pixel-wise binary entropy of $\mathbf{M}$. This forces the mask values to converge towards either 0 (irrelevant) or 1 (selected):
\begin{equation}
    \mathcal{R}_{\text{sparse}} = \frac{1}{D \cdot K} \sum_{k=1}^{K} \sum_{i=1}^{D} \Phi(M_{i,k}),
\end{equation}
where $\Phi(p) = -p \log p - (1-p) \log (1-p)$. By minimizing $\mathcal{R}_{\text{sparse}}$, the model is encouraged to prune unnecessary connections, naturally revealing the underlying sparse structure of the data.
\paragraph{Rule Distinctiveness.}
A common pathology in neuro-fuzzy learning is mode collapse, where multiple rules converge to characterize identical local regions, leading to redundancy. To enforce diversity, we propose a distinctiveness regularization term that penalizes the overlap between the firing patterns of different rules. Let $\mathbf{w}_j \in \mathbb{R}^B$ denote the vector of firing strengths for rule $j$ across a mini-batch of size $B$. We minimize the pairwise cosine similarity between all distinct pairs of rules:
\begin{equation}
    \mathcal{R}_{\text{distinct}} = \sum_{j=1}^{K} \sum_{k=j+1}^{K} \frac{\mathbf{w}_j^\top \mathbf{w}_k}{\|\mathbf{w}_j\|_2 \|\mathbf{w}_k\|_2}.
\end{equation}
Minimizing this term encourages the rules to be orthogonal in their activation space, ensuring that each rule specializes in a unique sub-region of the input domain, thereby maximizing the informational capacity of the limited rule base.

\subsection{Overall Forward Computation}
The complete forward propagation of the KANFIS architecture can be summarized as
\begin{equation}
\mathbf{h}^{(0)} = \mathbf{x},
\label{eq:forward_h0_full}
\end{equation}
\begin{equation}
\mathbf{h}^{(\ell)} =
\mathcal{F}^{(\ell)}\!\left( \mathbf{h}^{(\ell-1)} \right),
\qquad \ell = 1, \ldots, L,
\label{eq:forward_layers_full}
\end{equation}
\begin{equation}
\hat{\mathbf{y}}
=
W\mathbf{h}^{(L)} + b.
\label{eq:forward_final_full}
\end{equation}

Together,
Equation~\eqref{eq:sigma_constraints_full}-\eqref{eq:forward_final_full}
constitute a complete mathematical description of the proposed KANFIS model. Regarding the universal approximation capability of KANFIS, we provide the proof in the \textbf{Appendix~\ref{appendix:A}}.

\section{Experiments}
To evaluate the capabilities of the proposed model, we select datasets spanning diverse domains and conduct comparative experiments against baseline models, including MLP, ANFIS, and KAN. These datasets exhibit significant variations in terms of sample size and feature dimensionality, providing a challenging testbed for performance evaluation.
\begin{figure}[!t]
    \centering
    \includegraphics[width=0.5\textwidth]{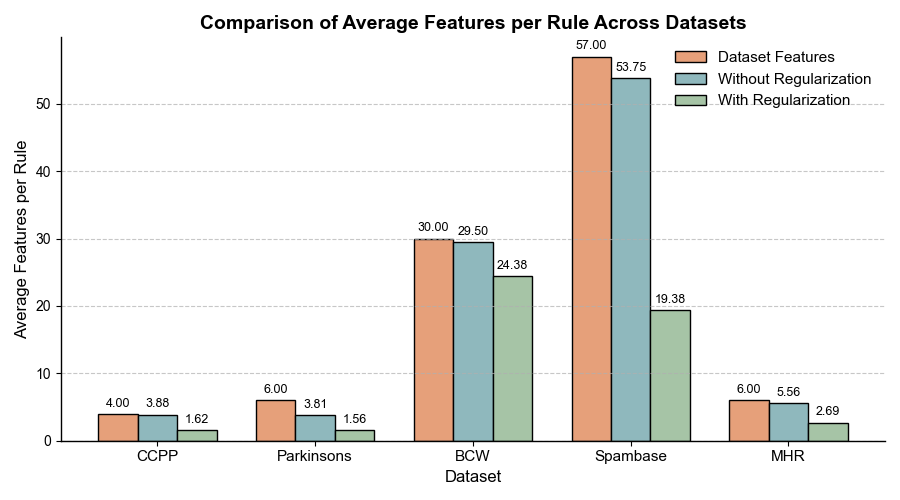}
    \vspace{-6mm}
    \caption{\textbf{Comparison of the average number of features per rule across datasets.} The figure illustrates the comparison between the number of features used in the rules and the original number of features in each dataset, depending on whether regularization is applied.}
    \label{regularization}
    \vspace{-6mm}
\end{figure}
\subsection{Comparison Experiments}
Table \ref{tab:main_results} presents the comparative experimental results of KANFIS against other baseline models. The results indicate that KANFIS achieves the best performance across multiple datasets, outperforming the other comparative methods. Furthermore, in contrast to MLP, which typically require deep architectures to capture complex patterns, KANFIS often achieves optimal convergence with a single-layer structure, thereby highlighting its structural efficiency.

\begin{table*}[!t]
    \renewcommand\tabularxcolumn[1]{m{#1}}
    \centering
    \caption{\textbf{Interpretability analysis of fuzzy rules on the MHR dataset.} The table presents the six most important rules learned by the model along with their corresponding medical principles, and their validity is supported by evidence from the cited relevant studies. For clarity, only the class formula that best matches each rule is reported in the THEN clause. 
 }
 \vspace{-2mm}
    \label{tab:rule_interpretability2}
    
    \footnotesize 
    \renewcommand{\arraystretch}{1.05}
    \setlength{\tabcolsep}{3pt} 
    

    \begin{tabularx}{\textwidth}{ 
        @{\hspace{1pt}} 
        >{\centering\arraybackslash}m{0.2\textwidth} 
        >{\centering\arraybackslash}m{0.5\textwidth} 
        >{\centering\arraybackslash}X 
        @{\hspace{1pt}} 
        }
        \toprule
        \textbf{IF} & \textbf{THEN} & \textbf{Interpretation} \\ 
        \midrule

        SystolicBP is HIGH \newline  \& DiastolicBP is MED
        & $P_{highrisk}=4.0611 \cdot (7.28 \cdot  \mathcal{M}_{\text{SystolicBP}}(x_{\text{SystolicBP}})+ 2.09 \cdot \mathcal{M}_{\text{DiastolicBP}}(x_{\text{DiastolicBP}}) )$
        & Elevated systolic and diastolic BP increase cardiovascular risk \cite{he1999elevated}.  \\
        \addlinespace[3pt] 
        
        Age is LOW \newline \& SystolicBP is HIGH \newline \& BS is LOW
        & $P_{lowrisk}=0.814 \cdot ( 3.59 \cdot \mathcal{M}_{\text{Age}}(x_{\text{Age}}) + 0.94\cdot \mathcal{M}_{\text{SystolicBP}}(x_{\text{SystolicBP}}) + 1.56 \cdot \mathcal{M}_{\text{BS}}(x_{\text{BS}}) )$
        & Young age with slightly high BP and low glucose indicates low overall risk \cite{d2008general}.   \\
        \addlinespace[3pt]
        
        DiastolicBP is LOW \newline
        & $P_{highrisk}=0.2811 \cdot (4.49 \cdot \mathcal{M}_{\text{DiastolicBP}}(x_{\text{DiastolicBP}}) )$
        &Low diastolic BP may reduce perfusion and raise cardiovascular risk \cite{amin2006estradiol}.\\
        \addlinespace[3pt]
        
        SystolicBP is HIGH \newline \& BS is LOW \newline 
        & $P_{lowrisk}=  1.4001\cdot (  1.48\cdot \mathcal{M}_{\text{SystolicBP}}(x_{\text{SystolicBP}}) + 3.81 \cdot \mathcal{M}_{\text{BS}}(x_{\text{BS}}) )$ \newline
         $P_{mediumrisk}=  1.1433\cdot (  1.48\cdot \mathcal{M}_{\text{SystolicBP}}(x_{\text{SystolicBP}}) + 3.81 \cdot \mathcal{M}_{\text{BS}}(x_{\text{BS}}) )$
        & Slightly high SBP with low glucose corresponds to low-to-moderate risk \cite{d2008general}.\\
        \addlinespace[3pt]
        
        SystolicBP is MED \newline \& BS is LOW \newline \& BodyTemp is LOW
        & $P_{mediumrisk}=1.0646\cdot ( 2.2 \cdot \mathcal{M}_{\text{SystolicBP}}(x_{\text{SystolicBP}}) + 3.21\cdot \mathcal{M}_{\text{BS}}(x_{\text{BS}})+ 1.34\cdot \mathcal{M}_{\text{BodyTemp}}(x_{\text{BodyTemp}}) )$
        & High SBP with low glucose and normal/low temp indicates moderate risk \cite{d2008general}.\\
        \addlinespace[3pt]
        
        Age is HIGH \newline 
        & $P_{highrisk}=-0.3013 \cdot(2.81 \cdot \mathcal{M}_{\text{Age}}(x_{\text{Age}}) )$
        & Advanced age independently increases cardiovascular risk \cite{d2008general}.\\
        
        \bottomrule
    \end{tabularx}
    \vspace{-6mm}
\end{table*}

Traditional ANFIS methods are not well-suited for high-dimensional datasets. To enable a comprehensive comparison, T1-ANFIS retains the conventional partitioning strategy, while IT2-ANFIS adopts scatter partitioning, allowing meaningful evaluation in high-dimensional settings.
A closer inspection of the comparative results reveals significant limitations in existing approaches that our model successfully overcomes:
While traditional ANFIS performs adequately in low-dimensional spaces, it suffers severely from the curse of dimensionality. In high-dimensional settings, standard grid partitioning becomes computationally intractable due to the exponential explosion of rules. Although alternative techniques like scatter partitioning (employed in IT2-ANFIS) allow for execution on high-dimensional data, our results show that this adaptation comes at the cost of a significant drop in accuracy and a degradation of the model's inherent interpretability.
The standard KAN architecture exhibits poor scalability. We observed that the computational cost of KAN escalates drastically with the increase in data volume and feature dimensionality. Consequently, on larger datasets such as Spambase and ForestCover, the baseline KAN failed to complete the training process due to prohibitive computational demands.
Overall, KANFIS stands out as the most robust interpretable learner, delivering high performance without sacrificing model transparency. We explicitly demonstrate this interpretability through a case study in the subsequent section.

\subsection{Interpretability Analysis}
In this section, we demonstrate the interpretability of our model using an industrial regression dataset, the Combined Cycle Power Plant (CCPP), and a medical classification dataset, the Medical Health Records (MHR). CCPP includes four features, i.e., Ambient Temperature (AT), Exhaust Vacuum (V), Ambient Pressure (AP), and Relative Humidity (RH), with the target being Energy Output. MHR contains seven features, i.e., Age, Systolic Blood Pressure (SystolicBP), Diastolic Blood Pressure (DiastolicBP), Blood Sugar (BS), Body Temperature (BodyTemp), and Heart Rate (HeartRate), with the target being Risk Level. The features and targets of both datasets are supported by domain-specific prior knowledge, providing a reference to verify whether the model captures genuine physical or physiological patterns rather than spurious correlations. Furthermore, to better highlight interpretability, we reduced the number of rules in the model. 
KANFIS possesses structural transparency, allowing symbolic IF-THEN rules to be explicitly extracted. Table \ref{tab:rule_interpretability1} presents a subset of the dominant rules extracted from the model trained on the CCPP dataset, while Table \ref{tab:rule_interpretability2} shows a subset of the dominant rules extracted from the model trained on the MHR dataset.
As shown in the two tables above, and supported by references from relevant domain literature, we consider the IF-THEN rules provided by the model to be rigorous and consistent with expert knowledge. In the relatively simple CCPP dataset, almost every rule follows the basic principles: AT and V are negatively correlated with energy output, while AP and RH are positively correlated. In the more complex MHR dataset, each rule can also be reasonably interpreted based on relevant medical literature. This provides strong evidence that KANFIS does more than merely fit statistical distributions; it successfully rediscovers the underlying physical or physiological laws governing the data in an unsupervised manner. For higher-dimensional datasets, KANFIS can also control the number of features included in each rule through regularization.
\subsection{Ablation Experiments}
The proposed regularization mechanism is designed to exert precise control over the model structure. By applying regularization, we aim to ensure that the features used in the rules remain within a reasonable range, thereby maximizing interpretability. To validate this, we conducted ablation studies on five datasets to investigate the impact of regularization on model behavior.
As shown in Figure \ref{regularization}, the regularization term plays a crucial role in controlling model complexity and promoting interpretability. In the absence of regularization, rules across all datasets tend to incorporate all available features, leading to dense and entangled logic that is difficult to interpret. Introducing regularization enforces implicit feature selection at the rule level, substantially reducing the average number of active features per rule. Consequently, the model learns rules that better align with domain knowledge, with each rule focusing exclusively on the most informative variables, thereby significantly enhancing interpretability.
Our regularization method substantially reduces the number of features utilized by each rule, achieving a maximum reduction of 63.94\% on the Spambase dataset. Importantly, despite this increase in sparsity, predictive performance remains largely preserved, suggesting that the proposed regularization effectively eliminates redundant or noisy connections without compromising critical information. 
\section{Conclusion}
This work revisits the classical ANFIS model from the perspective of the Kolmogorov-Arnold representation theorem and proposes a compact neuro-fuzzy architecture, termed KANFIS. Without introducing additional structural complexity, KANFIS combines additive functional decomposition with Interval Type-2 fuzzy modeling, achieving competitive predictive performance while preserving intrinsic interpretability and robustness to uncertainty through explicit fuzzy reasoning, as well as improved scalability in high-dimensional settings.
More broadly, KANFIS offers a unifying perspective for integrating classical neuro-fuzzy models with modern function representation principles. Future work may explore the incorporation of richer fuzzy reasoning mechanisms or logical operators within this framework, as well as extensions to more complex tasks and application scenarios, further advancing interpretable neuro-symbolic learning.




\bibliography{example_paper}
\bibliographystyle{icml2026}

\newpage
\appendix
\onecolumn

\section{Universal Approximation Theorem of KANFIS}
\label{appendix:A}

This section formally establishes the universal approximation capability of the KANFIS model. The proof demonstrates that by leveraging the inherent approximation power of its Interval Type-2 (IT2) Gaussian basis functions, the KANFIS architecture can approximate any continuous multivariate function on a compact domain.

\subsection{KANFIS Model Formulation}

The single-layer KANFIS model is an additive network structure where the mapping from the input $\mathbf{x} = (x_1, \dots, x_n)$ to the output $f_{\text{KANFIS}}(\mathbf{x})$ is defined by:
$$f_{\text{KANFIS}}(\mathbf{x}) = \sum_{j=1}^{R} w_j \cdot h_j(\mathbf{x}) + b$$
The rule activation $h_j(\mathbf{x})$ aggregates the feature contributions via summation: $h_j(\mathbf{x}) = \sum_{i=1}^{n} \phi_{ij}(x_i)$. The core component, the edge function $\phi_{ij}(x_i)$, is a summation of $M$ center type-reduced IT2 Gaussian MFs:
$$\phi_{ij}(x_i) = \sum_{m=1}^{M} \frac{1}{2} \left[ \mu_{U, ijm}(x_i) + \mu_{L, ijm}(x_i) \right]$$

\subsection{Key Lemma: RBF Approximation Power}

The proof hinges on the ability of the parameterized edge functions to model single-variable functions.

\begin{lemma}
\label{lem:rbf_approx_detailed}
Let $\psi(x)$ be any continuous function defined on a compact set $C \subset \mathbb{R}$. For any $\epsilon_1 > 0$, the KANFIS edge function $\phi_{ij}(x)$ can be parameterized by adjusting its centers and widths $\{\mu, \sigma_l, \sigma_u\}$ such that it approximates $\psi(x)$ within the error tolerance $\epsilon_1$.
$$\|\phi_{ij}(x) - \psi(x)\|_{\infty} < \epsilon_1$$
\end{lemma}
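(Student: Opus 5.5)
The natural plan is to show that the set $\mathcal{S}$ of all functions $x\mapsto\sum_{m=1}^{M}\tfrac12[\mu_{U,m}(x)+\mu_{L,m}(x)]$, with $M$ free and only $\{\mu,\sigma_l,\sigma_u\}$ adjustable, is uniformly dense in $C(C)$. Before committing to a construction, the first step is to decide what $\mathcal{S}$ can possibly reach. That step shows the lemma cannot be proved as worded: it is false. I would therefore record the obstruction as the proof's main finding, and then say what the appendix's universal-approximation argument actually needs from this lemma.

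\emph{Step 1 (positivity obstruction).} Every summand is a Gaussian with peak value $1$, so every $\phi\in\mathcal{S}$ satisfies $\phi>0$. Take $\psi\equiv-1$ on $C$. Then $\|\phi-\psi\|_\infty\ge 1$ for every $\phi\in\mathcal{S}$, which contradicts the statement for any $\epsilon_1<1$. Centers and widths enter only inside the exponent, so they control where a bump sits and how spread it is, never its sign.

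\emph{Step 2 (the obstruction persists for nonnegative $\psi$).} One might try to rescue the result for $\psi\ge0$. The plan would be to realise amplitudes by placing centers outside $C$ with very large widths. With $\mu=\mp R$ and $\sigma^2=R/\lambda$, letting $R\to\infty$ gives the summand $e^{-\lambda R/2}e^{\pm\lambda x}$ uniformly on $C$, so small positive constants and exponentials lie in the closure of $\mathcal{S}$. This fails for non-convex small targets, which is the step I expect to be decisive.

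Let $C=[-1,1]$, $\psi(x)=\tfrac12\max(0,1-4|x|)$, and $\epsilon_1<\tfrac1{8}$. Any summand whose center lies in $C$ attains the value $\tfrac12\cdot2=1$ there, which already exceeds $\psi+\epsilon_1$. Hence every summand has its center outside $C$ and is monotone on $C$. An increasing summand is bounded on $C$ by its value at $x=1$, and that value is below $\epsilon_1$ because the whole positive sum must be below $\epsilon_1$ at $x=1$. Symmetrically, every decreasing summand is below $\epsilon_1$ on $C$. So $\phi(0)<2\epsilon_1<\tfrac12-\epsilon_1$, and no $\phi\in\mathcal{S}$ is $\epsilon_1$-close to $\psi$. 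Thus the lemma fails even for nonnegative targets.

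\emph{Step 3 (what can be salvaged for Appendix~\ref{appendix:A}).} What the universal-approximation theorem genuinely uses is density of $\mathrm{span}\{w_j\phi_{ij}\}+b$. The readout weights $w_j$ and the bias $b$ are the only sign- and scale-carrying parameters in the model. In that setting I would invoke density of Gaussian RBF networks with signed coefficients, setting $\sigma_l=\sigma_u$ in the limit so that the IT2 average reduces to a single Gaussian. I flag that this is a different, weaker-hypothesis result, not the lemma as stated. The honest conclusion is that the lemma must be amended, for instance by admitting learnable signed amplitudes or by approximating only up to the outer linear layer. The counterexamples of Steps~1 and~2 should be included to justify that amendment.
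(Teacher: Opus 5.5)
Your refutation is correct, and the gap is in the paper's argument, not yours. The paper's proof simply invokes the universal approximation theorem for RBF networks. That theorem is about sums $\sum_m a_m G_m(x)$ whose output coefficients $a_m\in\mathbb{R}$ are free. In $\phi_{ij}$, every UMF and LMF enters with the fixed coefficient $\tfrac12$, and the lemma itself lets only $\{\mu,\sigma_l,\sigma_u\}$ vary, so the cited theorem does not apply.

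Your Step~1 ($\phi_{ij}>0$, $\psi\equiv-1$, error at least $1$) isolates exactly this. Your tent example in Step~2 is a valid strengthening: it shows that restricting to nonnegative targets does not repair the statement.

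The paper's RBF route does become correct for your amended version with signed amplitudes. One way is to let $\sigma_u\downarrow\sigma_l$. Another works directly: with a fixed ratio $\sigma_u/\sigma_l$, the averaged profile $\tfrac12(G_{\sigma_u}+G_{\sigma_l})$ is itself an admissible bounded, integrable, continuous kernel with nonzero integral.

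Two small fixes are needed in Step~2.
\begin{itemize}
\item The bound $\phi(0)<2\epsilon_1$ does not follow from bounding each summand by $\epsilon_1$, which only gives $M\epsilon_1$. Argue instead that the sum $I$ of all increasing summands is itself increasing, so $I(0)\le I(1)\le\phi(1)<\epsilon_1$. Do the same for the decreasing summands at $x=-1$.
\item In the heuristic before the example, $\mu=-R$ yields $e^{-\lambda R/2}e^{-\lambda x}$, not $e^{+\lambda x}$. Also, a fixed amplitude $ae^{\lambda x}$ is reached only by stacking roughly $ae^{\lambda R/2}$ copies. Neither point affects the counterexample.
\end{itemize}

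Finally, Step~3 understates what is broken in Appendix~\ref{appendix:A}. The single-layer model equals $\sum_i g_i(x_i)+b$ with $g_i=\sum_j w_j\phi_{ij}$, so it is additively separable. Take $f(x_1,x_2)=x_1x_2$ on $[0,1]^2$: the combination $f(0,0)+f(1,1)-f(0,1)-f(1,0)$ equals $1$, but it vanishes on every separable function. This forces a sup-error of at least $\tfrac14$. So the theorem's Step~1, which assumes an additive $f^*$ approximating arbitrary $f$, already fails for $n\ge 2$, and amending the lemma alone cannot rescue it.
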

\begin{proof}

The structure of $\phi_{ij}(x)$ constitutes a linear combination of Gaussian basis functions with adjustable parameters. Based on the universal approximation theorem for Radial Basis Function Network (RBFN), this specific form of basis function superposition is sufficient to approximate any continuous single-variable function $\psi(x)$ on the compact domain $C$ to arbitrary precision. The IT2 nature enhances the model's capacity to handle uncertainty but retains the necessary mathematical property of basis function approximation.
\end{proof}

\subsection{Universal Approximation Theorem}

\cref{lem:rbf_approx_detailed} allows us to establish the main result for the overall KANFIS architecture.

\begin{theorem}
\label{thm:kanfis_ua_detailed}
Let $K \subset \mathbb{R}^n$ be a compact set and $f: K \to \mathbb{R}$ be any continuous function. For any $\epsilon > 0$, there exists a KANFIS model $f_{\text{KANFIS}}(\mathbf{x})$ such that:
$$\sup_{\mathbf{x} \in K} |f(\mathbf{x}) - f_{\text{KANFIS}}(\mathbf{x})| < \epsilon$$
\end{theorem}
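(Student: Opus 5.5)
The single-layer formula cannot prove the statement: it collapses to an additive model, since
$$f_{\text{KANFIS}}(\mathbf{x})=\sum_{i=1}^n g_i(x_i)+b,\qquad g_i=\sum_{j=1}^R w_j\phi_{ij}.$$
No additive model approximates, say, $f(x_1,x_2)=x_1x_2$ uniformly on $[0,1]^2$. Any sum $g_1(x_1)+g_2(x_2)$ satisfies $F(1,1)-F(1,0)-F(0,1)+F(0,0)=0$, whereas this quantity equals $1$ for $x_1x_2$, so the error cannot go below $1/4$. I would therefore prove the theorem for the two-layer KANFIS of the Deep Stacking subsection. That architecture is exactly the Kolmogorov--Arnold superposition shape. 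The plan is to invoke the Kolmogorov--Arnold representation theorem and then replace every univariate function in it by a KANFIS edge function using \cref{lem:rbf_approx_detailed}, taken as given.

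\textbf{Step 1 (representation).} Since $K$ is compact, place it inside a cube and rescale to $[0,1]^n$; extend $f$ continuously by Tietze. Kolmogorov--Arnold then gives continuous univariate $\psi_{pq}$ and $\Phi_q$ such that
$$f(\mathbf{x})=\sum_{q=0}^{2n}\Phi_q\Big(\sum_{p=1}^n\psi_{pq}(x_p)\Big).$$

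\textbf{Step 2 (inner layer).} Take $2n+1$ hidden units in layer one. By \cref{lem:rbf_approx_detailed}, choose edge functions with $\|\phi^{(1)}_{pq}-\psi_{pq}\|_\infty<\delta/n$ on $[0,1]$. Then $h^{(1)}_q$ is within $\delta$ of the inner sum $s_q(\mathbf{x})=\sum_p\psi_{pq}(x_p)$, uniformly in $\mathbf{x}$. The values of $s_q$ lie in a compact interval $I_q$, so $h^{(1)}_q$ lies in the compact $\delta$-enlargement $I_q^{\delta}$.

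\textbf{Step 3 (normalization).} This is where I expect the real obstacle, because the inter-layer $\mathrm{Norm}$ may depend on batch statistics. I would first treat $\mathrm{Norm}$ as a fixed invertible affine map $T$ at inference time. Then I compose: for layer two I approximate $\Phi_q\circ T^{-1}$ on the compact interval $T(I_q^{\delta})$ instead of $\Phi_q$ on $I_q^\delta$. If $\mathrm{Norm}$ is genuinely data-dependent, I would state the theorem for the un-normalized stack, or with $\mathrm{Norm}$ frozen.

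\textbf{Step 4 (outer layer).} Layer two has edges from hidden unit $q$ to a single output unit, followed by the linear defuzzifier with $W=1$, $b=0$. I set the edge from unit $q$ to approximate $\Phi_q\circ T^{-1}$ within $\epsilon/(2(2n+1))$ on $T(I_q^{\delta})$, again by \cref{lem:rbf_approx_detailed}. Edges that must vanish are handled either by the same lemma or by absorbing them through $W$.

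\textbf{Step 5 (error split).} For each $q$, the total error splits into two pieces:
\begin{itemize}
\item the outer approximation error, at most $\epsilon/(2(2n+1))$ by Step 4;
\item the propagation error $|\Phi_q(h^{(1)}_q)-\Phi_q(s_q)|$, controlled by uniform continuity of $\Phi_q$ on the compact set $I_q^{1}$.
\end{itemize}
Choose $\delta\le1$ small enough that this propagation error is also below $\epsilon/(2(2n+1))$. Summing over the $2n+1$ values of $q$ gives total error below $\epsilon$.

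The only routine care is the order of choices: fix $\delta$ first using the uniform continuity of the $\Phi_q$, and only then apply the lemma to the outer edges on the slightly enlarged compact domains.

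I would also flag one more point. As written, $\phi_{ij}$ is an unweighted sum of positive Gaussians, so it is strictly positive, and \cref{lem:rbf_approx_detailed} is only plausible with signed amplitudes or an additive offset. The argument above needs exactly the lemma in its stated form, which I am assuming.
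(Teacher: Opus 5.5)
Your route is genuinely different from the paper's, and yours is the one that works. The paper stays with the single-layer model. Its Step 1 simply \emph{assumes}, by appeal to TSK-type results, an ideal $f^*(\mathbf{x})=\sum_j c_j\sum_i\psi_{ij}(x_i)+c_0$ with $\|f-f^*\|_\infty<\epsilon_2$. Steps 2--4 then replace each $\psi_{ij}$ by an edge function via \cref{lem:rbf_approx_detailed}, with budget $\epsilon_1=(\epsilon-2\epsilon_2)/(2nRC_{\max})$, and close with the triangle inequality.

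That $f^*$ is additively separable. Your mixed-difference argument therefore rules it out for $f=x_1x_2$ on $[0,1]^2$ as soon as $\epsilon_2<1/4$. So the paper's Step 1 is false, not merely unproved, and the single-layer reading of the theorem is false. The statement's wording (``there exists a KANFIS model'') does admit your two-layer reading, via the Deep Stacking subsection.

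What you share with the paper is only the substitution bookkeeping: per-edge error budgets summed by the triangle inequality, which is your Step 2 and the first half of Step 5. The paper's version buys brevity and avoids inter-layer issues. Yours buys a true result, with the genuinely multivariate part carried by the Kolmogorov--Arnold outer functions $\Phi_q$. The price is a second layer, a treatment of $\mathrm{Norm}$, and a propagation estimate through $\Phi_q$ by uniform continuity on $I_q^{1}$. That estimate is unnecessary in the paper, where the substituted functions enter linearly.

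Two precisions. First, your frozen map $T$ must act coordinatewise, e.g.\ batch normalization with running statistics and nonzero gains. A coordinate-mixing map, or layer normalization (which also discards the mean and scale of $\mathbf{h}^{(1)}$), would make $\Phi_q\circ T^{-1}$ depend on several hidden units. In that case your fallback formulation is the right one.

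Second, your positivity remark is correct, since $0<\phi_{ij}\le M$, and it undermines the paper's proof equally. In your two-layer setting it can be bypassed: approximate large positive affine rescalings of the target functions (with $M$ large), then undo them through the next layer's edges and through $W$ and $b$.
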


\begin{proof}
The proof follows the established methodology for showing universal approximation of additive structures.

\textbf{1. Approximation by Ideal Additive Structure ($f^*$):}
By the generalized universal approximation theorems for additive networks (e.g., TSK models), for any $\epsilon_2 > 0$, we assume the existence of an ideal function $f^*(\mathbf{x})$ with the KANFIS structure:
$$f^*(\mathbf{x}) = \sum_{j=1}^{R} c_j \cdot \left[ \sum_{i=1}^{n} \psi_{ij}(x_i) \right] + c_0$$
such that:
$$\|f(\mathbf{x}) - f^*(\mathbf{x})\|_\infty < \epsilon_2$$

\textbf{2. Edge Function Substitution and Error Control:}
We construct the KANFIS model $f_K$ by replacing $\psi_{ij}$ with the learned edge functions $\phi_{ij}$, setting $w_j = c_j$ and $b=c_0$. Let $C_{\max} = \max_j |c_j|$. Based on \cref{lem:rbf_approx_detailed}, we choose the approximation error $\epsilon_1$ for $\phi_{ij}$ to satisfy the required global bound:
$$\|\psi_{ij}(x_i) - \phi_{ij}(x_i)\|_{\infty} < \epsilon_1 = \frac{\epsilon - 2\epsilon_2}{2 n R C_{\max}}$$

\textbf{3. Bounding the Structural Error ($\|f^* - f_K\|_\infty$):}
The error introduced by substituting $\psi_{ij}$ with $\phi_{ij}$ is bounded using the triangle inequality:
\begin{align*}
\|f^{*} - f_{K}\|_\infty & \le \sum_{j=1}^{R} |c_j| \sum_{i=1}^{n} \|\phi_{ij}(x_i) - \psi_{ij}(x_i)\|_\infty \\
& < R C_{\max} n \cdot \epsilon_1 \\
& = R C_{\max} n \cdot \left( \frac{\epsilon - 2\epsilon_2}{2 n R C_{\max}} \right) \\
& = \frac{\epsilon}{2} - \epsilon_2
\end{align*}

\textbf{4. Final Error Analysis:}
The total approximation error is the sum of the ideal approximation error ($\epsilon_2$) and the structural error ($\|f^* - f_K\|_\infty$):
\begin{align*}
\|f - f_K\|_\infty & \le \|f - f^*\|_\infty + \|f^* - f_K\|_\infty \\
& < \epsilon_2 + \left( \frac{\epsilon}{2} - \epsilon_2 \right) \\
& = \frac{\epsilon}{2}
\end{align*}
By selecting $\epsilon_2$ sufficiently small (e.g., $\epsilon_2 = \epsilon/4$), the total error is less than $\epsilon$. Since $\epsilon$ is arbitrary, the proof is complete.
\end{proof}

\section{Comparison of Approximation Properties: Additive vs. Product Fuzzy Systems}
\label{Appendix:B}

This appendix provides a formal analysis comparing the function approximation capabilities and complexity scaling of Additive Fuzzy Systems (AFS, exemplified by KANFIS) and Product Fuzzy Systems (PFS, exemplified by ANFIS). Let $N$ be the input dimension and $X \subset \mathbb{R}^N$ be a compact input space.

\subsection{Structural Definitions}

\begin{definition}[Product Fuzzy System (PFS) Structure]
The output $f_{\text{prod}}(\mathbf{x})$ of a PFS (e.g., TSK-ANFIS) uses the Product T-Norm for rule aggregation. The firing strength $\tau_j(\mathbf{x})$ is:
$$\tau_j(\mathbf{x}) = \prod_{i=1}^{N} \mu_{i, j}(x_i)$$
The system output is calculated by the weighted sum of consequents:
$$f_{\text{prod}}(\mathbf{x}) = \frac{\sum_{j=1}^{R} \tau_j(\mathbf{x}) \cdot c_j}{\sum_{j=1}^{R} \tau_j(\mathbf{x})}$$
where $R$ is the number of rules, and $c_j$ are the consequent parameters.
\end{definition}

\begin{definition}[Additive Fuzzy System (AFS) Structure]
The output $f_{\text{sum}}(\mathbf{x})$ of an AFS (e.g., KANFIS) is defined by an additive decomposition, following the structure of a Kolmogorov-Arnold Network:
$$f_{\text{sum}}(\mathbf{x}) = \sum_{j=1}^{H} \left( \sum_{i=1}^{N} \psi_{i, j}(x_i) \right)$$
where $H$ is the hidden layer width, and $\psi_{i, j}(x_i)$ is the single-variable function derived from the input $x_i$ to the hidden node $j$.
\end{definition}

\subsection{Approximation Capability and Scalability Disparity}

\begin{theorem}[Universal Approximation Equivalence]
Both Product Fuzzy Systems $f_{\text{prod}}$ and Additive Fuzzy Systems $f_{\text{sum}}$ are universal function approximators on a compact set $X$.
\end{theorem}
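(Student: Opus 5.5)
The plan is to prove the two halves separately, since they rest on different mechanisms, and to reuse the machinery of Appendix~\ref{appendix:A} for the additive half.

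\textbf{Product half.} I would take Gaussian memberships $\mu_{i,j}(x_i)=\exp(-(x_i-m_{ij})^2/2\sigma_{ij}^2)$. The key identity is that the firing strength $\tau_j(\mathbf{x})=\prod_i \mu_{i,j}(x_i)$ is itself a multivariate Gaussian centred at $\mathbf{m}_j$. The set of functions $f_{\text{prod}}$ of this form is closed under addition and multiplication: the product of two normalized ratios $\sum\tau c/\sum\tau$ is again such a ratio, with rules indexed by pairs and centres and widths merged by completing the square. It contains the constants, by taking $c_j\equiv c$. It separates points of $X$, since a single Gaussian bump with the other $c_j=0$ already distinguishes any two points. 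The Stone--Weierstrass theorem on the compact set $X$ then gives density in $C(X)$ under the sup norm. This is the classical Wang--Mendel argument, and I would only sketch the closure computation.

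\textbf{Additive half.} Here I would identify $f_{\text{sum}}$ with the single-layer KANFIS map of Appendix~\ref{appendix:A}. The consequent weights $w_j$ and bias $b$ are absorbed into the edge functions via $\psi_{i,j}\leftarrow w_j\psi_{i,j}$ and by adding the constant $b/(NH)$ to each edge. Every $f_{\text{KANFIS}}$ is therefore an $f_{\text{sum}}$. Universality of $f_{\text{sum}}$ then follows from Theorem~\ref{thm:kanfis_ua_detailed}. That theorem in turn uses Lemma~\ref{lem:rbf_approx_detailed} to replace each ideal univariate $\psi_{ij}$ by a sum of type-reduced IT2 Gaussians with error $\epsilon_1$. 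The triangle-inequality bound $NH\max_j|c_j|\,\epsilon_1$ then controls the substitution error exactly as in Step~3 of that proof. The steps, in order, are:
\begin{enumerate}
\item the absorption identity showing every $f_{\text{KANFIS}}$ is an $f_{\text{sum}}$;
\item invoking the ideal additive approximant $f^*$;
\item edge substitution via Lemma~\ref{lem:rbf_approx_detailed};
\item the $\epsilon/2$ bookkeeping.
\end{enumerate}

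\textbf{Main obstacle.} The hard step is Step~1 of Theorem~\ref{thm:kanfis_ua_detailed}, namely the existence of the ideal additive approximant $f^*$. Read literally, $\sum_j\sum_i\psi_{i,j}(x_i)=\sum_i g_i(x_i)$ with $g_i=\sum_j\psi_{i,j}$ is separable, so it cannot approximate, say, $x_1x_2$ on $[0,1]^2$. The stated equivalence therefore only goes through if the additive system is read in the Kolmogorov--Arnold sense the definition alludes to. In that reading, $\psi_{i,j}$ is the composite edge map realized by the stacked layers with the normalization of Eq.~\eqref{eq:layer_norm_full}, so an outer univariate map acts on each inner sum. I would try this reading first. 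Concretely, I would invoke the Kolmogorov--Arnold theorem $f(\mathbf{x})=\sum_{q=0}^{2N}\Phi_q\big(\sum_p\psi_{p,q}(x_p)\big)$ to produce $f^*$. I would then apply Lemma~\ref{lem:rbf_approx_detailed} to both inner and outer univariate functions. Uniform continuity of the $\Phi_q$ on the compact range of the inner sums is what controls how inner errors propagate. If that interpretation is not admissible for the definition as written, this is the step where the argument for the additive half would fail.
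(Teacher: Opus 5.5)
Your closing diagnosis is correct and sharper than the paper's own proof, but you should resolve the conditional negatively rather than read around it: the additive half of the statement is false. The definition fixes the outer maps to be the identity, so every $f_{\text{sum}}$ equals $\sum_{i=1}^N g_i(x_i)$ with $g_i=\sum_j\psi_{i,j}$. On $X=[0,1]^2$, define the mixed difference $\Delta h=h(1,1)-h(1,0)-h(0,1)+h(0,0)$. It vanishes for every such $g$, it equals $1$ for $x_1x_2$, and it satisfies $|\Delta h|\le 4\|h\|_\infty$. Hence
\[
\|x_1x_2-f_{\text{sum}}\|_\infty\ \ge\ \tfrac14\qquad\text{for every additive system } f_{\text{sum}},
\]
and the same holds on $[0,1]^N$ for any $N\ge 2$ (freeze the other coordinates). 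The paper's proof fails exactly here. It asserts that $f_{\text{sum}}$ ``is a two-layer KAN'' and invokes the Kolmogorov--Arnold theorem, which silently supplies nonlinear outer functions that the definition does not contain. For the same reason your Steps 1--4 must be discarded, not merely flagged. The single-layer map $\sum_j w_j\sum_i\phi_{ij}(x_i)+b=\sum_i\bigl(\sum_j w_j\phi_{ij}(x_i)\bigr)+b$ is separable too, so Theorem~\ref{thm:kanfis_ua_detailed} is false. Its Step 1 (``we assume the existence of $f^*$'') assumes the very conclusion you need. Your absorption identity is valid, but it only transfers universality from a class that does not have it.

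Reading $f_{\text{sum}}$ as a stacked ($L\ge 2$) KANFIS proves a different theorem, not this one. If you pursue that theorem, your uniform-continuity control of inner errors is the right mechanism, but two more ingredients are broken in the paper and need replacing. First, Lemma~\ref{lem:rbf_approx_detailed} is false for the edges as defined. Each $\phi_{ij}$ is an unweighted sum of type-reduced Gaussians, hence strictly positive, so it cannot approximate $\psi\equiv-1$. Signed coefficients must therefore come from other parameters; for the outer maps, one option is the output projection $W$ spread over several second-layer units. The inner maps need their own argument. Second, the normalization of Eq.~\eqref{eq:layer_norm_full} must act unit by unit. A LayerNorm across units couples the inner sums and destroys the form $\sum_q\Phi_q\bigl(\sum_p\psi_{p,q}(x_p)\bigr)$. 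Your product half is sound; it is the paper's Stone--Weierstrass route with the algebra closure made explicit. One repair is needed there: the product of two Gaussians is a Gaussian times a positive constant $C_{jk}$, and this constant does not cancel in the normalized ratio. Allow an amplitude factor in each membership, which the generic $\mu_{i,j}$ of the definition permits.
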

\begin{proof}
\begin{enumerate}
    \item \textbf{PFS ($f_{\text{prod}}$):} Established by the fact that the normalized basis functions of a TSK system form a partition of unity, allowing them to uniformly approximate any continuous function on a compact set (Stone-Weierstrass Theorem).
    \item \textbf{AFS ($f_{\text{sum}}$):} The structure is a two-layer KAN, which is a universal approximator based on the properties derived from the Kolmogorov-Arnold Representation Theorem. The use of continuous and trainable $\psi_{i, j}(x_i)$ satisfies the necessary conditions for this property.
\end{enumerate}
Thus, their theoretical ability to approximate any function is equivalent.
\end{proof}

\begin{theorem}[Efficiency Disparity in High Dimensions]
When approximating functions $f: X \to \mathbb{R}$, the parameter complexity of the Additive Fuzzy System $f_{\text{sum}}$ scales linearly with the dimension $N$, while the complexity of the Product Fuzzy System $f_{\text{prod}}$ scales exponentially with $N$.
\end{theorem}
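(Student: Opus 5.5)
The argument is a parameter count for each architecture. I will make both counts precise under a fixed per-feature resolution, and then read off the scaling in $N$.

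\textbf{Parameter count for $f_{\text{sum}}$.} Fix a budget of $M$ membership functions per edge $\psi_{i,j}$. With IT2 Gaussians, each edge carries $3M$ parameters, namely $(\mu,\sigma^{(l)},\sigma^{(u)})$ for every basis. With $H$ hidden units, the antecedent layer therefore has $3MNH$ parameters. The linear defuzzification adds $H+1$ more. The total is
\begin{equation*}
P_{\text{sum}}(N)=3MNH+H+1=\Theta(N)
\end{equation*}
for fixed $M,H$. This is exactly linear in $N$, and the same holds per layer under deep stacking.

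\textbf{Parameter count for $f_{\text{prod}}$.} Here I adopt the standard grid-partition setting of ANFIS. To resolve each input axis into $p\ge 2$ linguistic labels, a rule base that covers $X$ assigns one rule to each combination of labels, which gives $R=p^N$ rules. To justify the covering requirement, I use the product structure: if some cell of the grid $\prod_i A_{i,\ell_i}$ is not the support of any rule, then $\tau_j(\mathbf x)$ is small for every $j$ in that cell. The normalized output is then determined by rules centered elsewhere, so it cannot reproduce a localized feature there. Each rule carries consequent parameters, $1$ for zeroth order and $N+1$ for first-order TSK, and there are also shared MF parameters. This gives
\begin{equation*}
P_{\text{prod}}(N)\ \ge\ p^N=\Theta\!\left(e^{N\log p}\right).
\end{equation*}

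\textbf{Main obstacle.} The difficulty is turning this from a statement about the standard construction into a genuine lower bound. Scatter partitioning uses fewer rules, so I will restrict the claim to the class of targets the statement implicitly compares on. I will take functions with independent per-axis resolution, for example $f(\mathbf x)=\sum_i g_i(x_i)$, where each $g_i$ requires $p$ local pieces. For this class, $f_{\text{sum}}$ is exact with $H=1$. For $f_{\text{prod}}$, I will argue by local distinguishability. The $p^N$ grid cells are points of the form $\mathbf x_\ell$ at which $f$ takes distinct combinations of values. A normalized convex combination of $R$ constants can separate them only if enough rules dominate distinct cells, which forces $R\gtrsim p^N$. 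If this counting argument proves too delicate, I will fall back to stating the theorem relative to grid partitioning, as in the paper's ANFIS baseline, where $R=p^N$ holds by construction.
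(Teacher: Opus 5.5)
\textbf{Overlap with the paper.} Your count for $f_{\text{sum}}$ and your fallback for $f_{\text{prod}}$ ($R=p^N$ by construction under grid partitioning) are exactly the paper's proof. The paper simply asserts that a complete rule base needs $R=M^N$ rules and compares $O(M^N)$ with $O(N\cdot H\cdot P_\psi)$; it gives no lower-bound argument.

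\textbf{The gap.} It lies in what you present as your main route. The bound $R\gtrsim p^N$ for additive targets $f(\mathbf{x})=\sum_i g_i(x_i)$ is false, so the local-distinguishability argument cannot be completed. Distinct values at the $p^N$ grid points force nothing about $R$: a convex combination of just two constants, $c_2+(c_1-c_2)t(\mathbf{x})$, already takes a continuum of values. Your covering justification also fails. The normalized output depends only on the ratios $\tau_j/\sum_k\tau_k$, so rules whose $\tau_j$ are all small on a cell can still make the output vary substantially there.

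\textbf{Counterexample.} Let $g_i\approx\sum_{k=1}^{p}a_{ik}\phi_{ik}$ with Gaussian bumps $\phi_{ik}$, which is your own class. Use one baseline rule with $\tau_0\approx 1$, obtained from very wide MFs. Add $Np$ rules with $\tau_{ik}\approx\delta\,\phi_{ik}(x_i)$, built as follows. The MF on $x_i$ is $\phi_{ik}$. The MF on one other coordinate is a very wide Gaussian centred far outside $X$, so that it is uniformly close to the constant $\delta$. All remaining MFs are close to $1$. With $c_0=0$ and $c_{ik}=a_{ik}/\delta$,
\[
f_{\text{prod}}(\mathbf{x})\approx\frac{\sum_{i,k}a_{ik}\,\phi_{ik}(x_i)}{1+\delta\sum_{i,k}\phi_{ik}(x_i)}\longrightarrow\sum_{i,k}a_{ik}\,\phi_{ik}(x_i)\quad(\delta\to 0)
\]
uniformly on $X$. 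This uses $1+Np$ rules and $O(N^2p)$ parameters, which is polynomial, not exponential.

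With unrestricted MFs even two rules suffice. Take $\mu_{i,1}=e^{\delta(g_i-G_i)}$ with $G_i=\max_X g_i$, and $\mu_{i,2}\equiv 1$. Then $\tau_1/\tau_2=e^{\delta(f-G)}$ with $G=\sum_i G_i$. Choosing $c_1-c_2=4/\delta$ and $c_2=G-2/\delta$ gives $f_{\text{prod}}=f+O(\delta^2)$.

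\textbf{Consequence.} The additive class is the natural comparison class, because a single-layer $f_{\text{sum}}=\sum_i\bigl(\sum_j\psi_{i,j}(x_i)\bigr)$ is itself additive. On that class the exponential gap disappears. So the statement cannot be upgraded to an intrinsic lower bound along these lines. Drop the distinguishability step and state your fallback as the result. Say explicitly that $R=p^N$ is an assumption about the grid-partitioned product system, as in the paper, not a consequence of approximating $f$.
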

\begin{proof}
Let $M$ be the number of MFs per dimension ($M \ge 2$), and $P_{\psi}$ be the parameter count for a single function $\psi_{i, j}$.

\begin{enumerate}
    \item \textbf{Complexity of PFS ($f_{\text{prod}}$):} A complete rule base requires $R=M^N$ rules. The total parameter complexity $P_{\text{prod}}$ is:
    $$P_{\text{prod}} \propto O(M^N)$$
    This exponential scaling severely limits the practical application of PFS in high-dimensional space ($N$).
    
    \item \textbf{Complexity of AFS ($f_{\text{sum}}$):} The total number of single-variable functions $\psi_{i, j}$ is $N \cdot H$. The total parameter complexity $P_{\text{sum}}$ is:
    $$P_{\text{sum}} \propto O(N \cdot H \cdot P_{\psi})$$
    This demonstrates linear scaling with respect to the input dimension $N$.
\end{enumerate}
Due to the exponential vs. linear scaling, $\lim_{N \to \infty} P_{\text{prod}} / P_{\text{sum}} = \infty$. Consequently, the Additive Fuzzy System (KANFIS) offers superior practical scalability and efficiency for complex, high-dimensional function approximation.
$\square$
\end{proof}

\section{T1 Membership Function Specifications}
\label{Appendix:C}

This appendix provides the explicit definitions of the Type-1 membership functions used in this work. These formulations complement the Gaussian case presented in the main text and are included here for completeness and reproducibility.

\subsection{Structural Definitions}

\begin{definition}[Generalized Bell Membership Function]
The Generalized Bell membership function defines a smooth and localized fuzzy response for an input variable $x_i$. It is parameterized as

$$\mathcal{M}_{i,j,k}(x_i) =
\frac{1}{1 + \left| \frac{x_i - c_{i,j,k}}{a_{i,j,k}} \right|^{2 b_{i,j,k}}}$$,

where $a_{i,j,k} > 0$ controls the width of the membership region, $b_{i,j,k} > 0$ adjusts the slope around the center, and $c_{i,j,k}$ denotes the center location. All parameters are learnable.
\end{definition}

\begin{definition}[Sigmoid Membership Function]
The Sigmoid membership function models monotonic fuzzy relationships and is defined as

$$\mathcal{M}_{i,j,k}(x_i) =
\frac{1}{1 + \exp\left(-\alpha_{i,j,k}(x_i - \beta_{i,j,k})\right)}$$,
where $\alpha_{i,j,k}$ controls the slope and direction of monotonicity, and $\beta_{i,j,k}$ specifies the transition center. Both parameters are learnable.
\end{definition}

\subsection{Functional Characteristics}

The Generalized Bell function provides symmetric, compactly supported responses with adjustable tail behavior, enabling flexible modeling of localized nonlinearities. In contrast, the Sigmoid function introduces asymmetric and monotonic responses, which are suitable for representing threshold-like or directional dependencies. Together with the Gaussian membership function described in the main text, these Type-1 membership functions form a complementary set of basis functions for capturing heterogeneous nonlinear structures.

\end{document}